\newcommand*{\bdiv}{%
  \nonscript\mskip-\medmuskip\mkern5mu%
  \mathbin{\operator@font div}\penalty900\mkern5mu%
  \nonscript\mskip-\medmuskip
}
\newcommand{\ie}[0]{\emph{i.e.},~}
\newcommand{\eg}[0]{\emph{e.g.},~}
\newcommand{\defeq}{\ensuremath{\doteq}}
\newcommand{\acr}[1]{{{\textsc{#1}}}}
\newcommand{\tuple}[1]{\ensuremath{\left\langle #1 \right\rangle}}
\newcommand{\rel}[1]{\ensuremath{( #1 )}}
\newcommand{\set}[1]{\ensuremath{\mathcal{#1}}}
\newcommand{\func}[1]{\ensuremath{\mathrm{#1}}}
\newcommand{\relset}[0]{{\set{R}~}}
\newcommand{\entset}[0]{{\set{E}~}}
\newcommand{\world}[0]{\set{W}}
\newcommand{\simpleplus}[0]{SimplE$^+$}
\newcommand{\worldc}[0]{\set{W}^\mathsf{c}}
\newcommand{\kg}[0]{\set{K}}
\renewcommand{\Re}[0]{\mathds{R}}
\newcommand{\fim}[0]{\func{Im}}
\newcommand{\freal}[0]{\func{Re}}
\newcommand{\rank}[0]{\func{rank}}
\renewcommand{\psi}{\ensuremath{\mu}}
\newcommand{\vc}[1]{\ensuremath{\mathbf{#1}}}
\newcommand{\xv}[0]{\vc{h}}
\newcommand{\yv}[0]{\vc{t}}
\newcommand{\zv}[0]{\vc{z}}
\newcommand{\rv}[0]{\vc{r}}
\newcommand{\tv}[0]{\vc{t}}
\newcommand{\hv}[0]{\vc{h}}
\newcommand{\ev}[0]{\vc{s}}
\newcommand{\sv}[0]{\vc{s}}
\newcommand{\xe}[0]{h}
\newcommand{\ye}[0]{t}
\newcommand{\re}[0]{r}
\newcommand{\te}[0]{t}
\newcommand{\he}[0]{h}
\newcommand{\ee}[0]{s}
\newcommand{\entityv}[0]{\vc{e}}
\newtheorem{theorem}{Theorem}
\newtheorem*{proof}{Proof}%[section]
\begin{document} 
% The file aaai.sty is the style file for AAAI Press 
% proceedings, working notes, and technical reports.
%

\title{Improved Knowledge Graph Embedding using Background Taxonomic Information}
\author{Bahare Fatemi, Siamak Ravanbakhsh, and David Poole\\
The University of British Columbia\\
Vancouver, BC, V6T 1Z4\\
\{bfatemi, siamakx, poole\}@cs.ubc.ca\\
}
\maketitle
\begin{abstract}

Knowledge graphs are used to represent relational information in terms of triples.  To enable learning about domains, embedding models, such as tensor factorization models, can be used to make predictions of new triples. Often there is background taxonomic information (in terms of subclasses and subproperties) that should also be taken into account. We show that existing fully expressive (a.k.a. universal) models cannot provably respect subclass and subproperty information. We show that minimal modifications to an existing knowledge graph completion method enables injection of taxonomic information. Moreover, we prove that our model is fully expressive, assuming a lower-bound on the size of the embeddings. Experimental results on public knowledge graphs show that despite its simplicity our approach is surprisingly effective. 

\end{abstract}

\noindent 
The AI community has long noticed the importance of structure in data. While
traditional machine learning techniques have been mostly focused on feature-based representations, the primary form of data in the subfield of 
Statistical Relational AI (\acr{StaRAI})~\citep{getoor2007introduction, raedt2016statistical} is in the form of entities and relationships among them. Such entity-relationships are often in the form of $( \text{head, relationship, tail})$ triples, which can also be expressed in the form of a graph, with nodes as entities and labeled directed edges as relationships among entities. Predicting the existence, identity, and attributes of entities and their relationships are among the main goals of StaRAI.

Knowledge Graphs (\acr{KG}s) are graph structured knowledge bases that store facts about the world. A large number of \acr{KG}s have been created such as
% \acr{YAGO} \citep{suchanek2007yago},
% \acr{DBpedia} \citep{auer2007dbpedia}, 
\acr{NELL} \citep{carlson2010toward}, \acr{Freebase} \citep{bollacker2008freebase}, and Google Knowledge Vault \citep{dong2014knowledge}. These \acr{KG}s have applications in several fields including natural language processing, search, automatic question answering and recommendation systems.
Since accessing and storing all the facts in the world is difficult, \acr{KG}s are incomplete. The goal of \emph{link prediction} for KGs -- a.k.a. \emph{\acr{KG} completion} -- is to predict the unknown links or relationships in a \acr{KG} based on the existing ones. This often amounts to infer (the probability of) new triples from the existing triples.

A common approach to apply machine learning to symbolic data, such as text, graph and entity-relationships, is through embeddings. Word, sentence and paragraph embeddings~\citep{mikolov2013distributed,pennington2014glove},
% ,bojanowski2016enriching,joulin2016bag
which vectorize words, sentences and paragraphs using  context information, are widely used in a variety of natural language processing tasks from syntactic parsing to sentiment analysis. Graph embeddings~\citep{hoff2002latent,grover2016node2vec,perozzi2014deepwalk}  are  used in social network analysis for link prediction and community detection. 

In relational learning,  embeddings for entities and relationships are used to generalize from existing data. These embeddings are often formulated in terms of tensor factorization  \citep{nickel2012factorizing,bordes2013translating,trouillon2016complex,kazemi2018simple}. Here, the embeddings are learned such that their interaction through (tensor-)products best predicts the (probability of the) existence of the observed triples; see \citep{nguyen2017overview,wang2017knowledge} for details and discussion. 
Tensor factorization methods have been very successful, yet they rely on a large number of annotated triples to learn useful representations. There is often other information in ontologies which specifies the meaning of the symbols used in a knowledge base. One type of ontological information is represented in a hierarchical structure called a taxonomy. For example, a knowledge base might contain information that DJTrump, whose name is ``Donald Trump'' is a president, but may not contain information that he is a person, a mammal and an animal, because these are implied by taxonomic knowledge. Being told that mammals are chordates, lets us conclude that DJTrump is also a chordate, without needing to have triples specifying this about multiple mammals. We could also have information about subproperties, such as that being president is a subproperty of ``managing'', which in turn is a subproperty of ``interacts with''.

This paper is about combining taxonomic information in the form of subclass and subproperty (\eg managing implies interaction) into relational embedding models. We show that existing factorization models that are fully expressive cannot reflect such constraints for all legal entity embeddings. We propose a model that is provably fully expressive and can represent such taxonomic information, and evaluate its performance on real-world datasets.
%This is a major issue for many domains of interest, with sparse observations \SR{give some motivating examples here}. To address this, we may leverage background knowledge, which can a take diverse forms. For example, when leveraging a hierarchical taxonomy of entities which informs us that Fido is a dog and dogs are mammals, we can infer that Fido is also a mammal even if this fact is not in the \acr{KG}. Therefore, Fido inherits all properties of mammals, even though the properties is not explicitly stated in the training data.

%\SR{Before going to related work you need to tell us more about what you are going to achieve in this paper. What type of priors you can incorporate etc.}

\section{Factorization and Embedding}\label{sec:background}
Let \entset represent the set of entities and \relset represent the set of relations.
Let $\world$ be a set of \emph{triples} \rel{h, r, t} that are true in the world, where $h,t \in \entset$ are \emph{head} and \emph{tail}, and $r \in \relset$ is the \emph{relation} in the triple. We use $\worldc$ to represent the triples that are false -- \ie $\worldc \defeq \{ \rel{h,r,t} \in \entset \times \relset \times \entset  \mid \rel{h,r,t} \notin \world \}$. An example of a triple in $\world$ can be $\rel{\text{Paris, CapitalCityOfCountry, France}}$ and an example of a triple in $\worldc$ can be \rel{\text{Paris, CapitalCityOfCountry, Germany}}. A \acr{KG} $\kg \subseteq \world$ is a subset of all the facts. 
The problem of the \acr{KG} completion is to infer $\world$ from its subset \acr{KG}.
There exists a variety of methods for \acr{KG} completion. 
Here, we consider embedding methods and in particular using tensor-factorization. 
For a broader review of the existing \acr{KG} completion that can use background information see Related Work.
%One of the most successful family of methods are tensor-factorization methods that learn an embedding for entities and relations.

\textbf{Embeddings:} An {embedding} is a function from an entity or a relation to a vector (or sometimes higher order tensors) over a field.
We use bold lower-case for vectors -- that is $\ev \in \Re^k$ is an embedding of an entity and
$\rv \in \Re^{l}$ is an embedding of a relation.
%We use a subscript to identify the domain of the embedding -- that is $\phi_e \in \Re^k$ is an embedding of an entity and
%$\phi_r \in \Re^{l}$ is an embedding of a relation. 
%Some of the representations have multiple (often 2) embeddings for each entity and/or relation; we will make this explicit for each representation.
%We represent the embedding associated with an entity $e$ by $E(e)$ and the embedding associated with a relation $r$ with $E(r)$.
%\SR{I wouldn't call these tensor factorization. This is an embedding-based knowledge-graph completion.}

\textbf{Taxonomies:}
It is common to have structure over the symbols used in the triples, see \citep[\eg][]{Shoham:2016aa}. The Ontology Web Language (OWL) \citep{Hitzler:2012pd} defines (among many other meta-relations) subproperties and subclasses, where $p_1$ is a subproperty of $p_2$ if $\forall x, y: \rel{x,p_1,y}\rightarrow \rel{x,p_2,y}$, that is whenever $p_1$ is true, $p_2$ is also true. Classes can be defined either as a set with a class assertion (often called ``type") between an entity and a class, \eg saying $x$ is in class $C$ using $\rel{x,type,C}$ or in terms of the characteristic function of the class, a function that is true of element of the class. If $c$ is the characteristic function of class $C$, then $x$ is in class $c$ is written $\rel{x,c,true}$. For representations that treat entities and properties symmetrically, the two ways to define classes are essentially the same. $C_1$ is a subclass of $C_2$ if every entity in class $C_1$ is in class $C_2$, that is,  $\forall x: \rel{x,type,C_1}\rightarrow \rel{x,type,C_2}$ or  $\forall x: \rel{x,c_1,true}\rightarrow \rel{x,c_2,true}$ . If we treat $true$ as an entity, then subclass can be seen as a special case of subproperty. For the rest of the paper we will refer to subsumption in terms of subproperty (and so also of subclass). A non-trivial subsumption is one which is not symmetric; $p_1$ is a subproperty of $p_2$ and there is some relations that is true of $p_1$ that is not true of $p_2$. 
We want the subsumption to be over all possible entities; those entities that have a legal embedding according to the representation used, not just those we know exist. Let \set{E^*} be the set of all possible entities with a legal embedding according to the representation used.

% A method can represent a subsumption if it is enforced for all legal embeddings.

%\SR{does replacing $\phi$ with function $p$ make things more readable? We can switch back by removing the macro.}
\textbf{Tensor factorization:} For \acr{KG} completion a tensor factorization  defines a 
function $\mu: \Re^{k} \times \Re^{l} \times \Re^{k} \to [0,1]$ that takes the embeddings $\hv$, $\rv$ and $\tv$ of a triple $\rel{h,r,t}$ as input, and generates a prediction, \eg a probability, of the triple being true $\rel{h,r,t} \in \world$. In particular, $\mu$ is often a non-linearity applied to a multi-linear function of $\hv,\rv,\tv$. 
The family of methods that we study
uses the following multi-linear form:
Let $\vc{x}$, $\vc{y}$, and $\vc{z}$ be vectors of length $k$. Define $\tuple{\vc{x},\vc{y},\vc{z}}$ to be the sum of their element-wise product, namely 
\begin{align}\label{eq:prod}
\tuple{\vc{x},\vc{y},\vc{z}} \defeq  \sum_{\ell=1}^{k} \vc{x}_\ell  \vc{y}_\ell \zv_\ell
\end{align}
where $\vc{x}_\ell$ is the $\ell$-th element of vector $\vc{x}$.
%The embeddings $\phi$ are learned using the tuples in the \acr{KG} $\kg$. 

Here, we are interested in creating a tensor-factorization method that is fully expressive and can incorporate 
background information in the form of taxonomy. A model is \emph{fully expressive} if given any assignment of truth values to all triples,
% , \ie any $\world \subseteq \entset \times \relset \times \entset$, 
there exists an assignment of values to the embeddings of the entities and relations that accurately 
separates the triples belonging to $\world$ and $\worldc$ using $\mu$.

%Next we review the choice of embedding functions $\phi$ and reconstruction $\psi$, in two state-of-the-art tensor-factorization methods, and prove their inability to represent non-trivial subsumption. In a later section, we suggest a simple modification to address this shortcoming. 

\subsection{ComplEx}\label{sec:complex}
ComplEx \citep{trouillon2016complex} defines the reconstruction function $\mu$, 
such that the embedding of each entity and each relation is a vector of complex numbers. 
Let $\freal(\vc{x})$ and $\fim(\vc{x})$ denote the real and imaginary part of a complex vector \vc{x}.
In ComplEx, the probability of any triple $\rel{h,r,t}$ is 
\begin{align}\label{eq:complex}
\psi(\hv, \rv, \tv) \defeq \sigma(\freal(\tuple{\hv,\rv,\overline{\tv}})
\end{align}
where  $\sigma: \Re \to [0,1]$ is the sigmoid or logistic function, and $\overline{\vc{a}+i\vc{b}} \defeq \vc{a}-i\vc{b}$ (where $i = \sqrt{-1}$) is the element-wise conjugate of the complex vector $\vc{a}+i\vc{b}$. 
Note that, if the tail did not use the conjugate, the head and tail would be treated symmetrically and it could only represent symmetric relations; \eg see DistMult in \cite{yang2014embedding}.

%We can split the complex embeddings $\phi_e$ (and $\phi_r$) in half: the real part,  $\freal(\phi_e)$ and the imaginary part, $\fim(\phi_e)$, so that $\phi_e = \freal(\phi_e) + i\, \fim(\phi_e)$.
%Embedding of each relation $r$ and entity $e$ is represented as $re(r) + im(r)i$ and $re(e) + im(e)i$ respectively, where $re$ and $im$ are two vectors of size $k$ and $i$ is the square root of $-1$. \citep{}
% Using this notation, the reconstruction function $\psi$ that estimates the probability of $\rel{h,r,t} \in \world$, used by ComplEx can be written as
% \begin{align*}
%  &\psi(\phi_h, \phi_r, \phi_t) \defeq  \sigma \left (
%  \tuple{\freal(\phi_h),\freal(\phi_r),\freal(\phi_t)} \\
%  &+ \tuple{\freal(\phi_h),\fim(\phi_r),\fim(\phi_t)} 
%  + \tuple{\fim(\phi_h),\freal(\phi_r),\fim(\phi_t)} \\
%  & - \tuple{\fim(\phi_h),\fim(\phi_r),\freal(\phi_t)} \right)
% \end{align*}
% We could interpret the expression inside the sigmoid function as the real part of the product \cref{eq:prod} of complex vector embeddings $\phi = \freal(\phi) + i\, \fim(\phi)$.

%\subsection{ComplEx is fully expressive}
\citet{trouillon2017knowledge} prove that ComplEx is fully expressive. In particular,
they prove that any assignment of ground truth can be modeled by ComplEx embeddings of length $|\entset| |\relset|$.
The following theorem shows that we cannot use ComplEx to enforce our prior knowledge about taxonomies. 

\begin{theorem}
ComplEx cannot enforce non-trivial subsumption.
\end{theorem}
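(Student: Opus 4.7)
The plan is to exploit the odd symmetry of the ComplEx scoring function under negation of entity embeddings. Concretely, I would argue by contradiction: assume there exist relation embeddings $\rv_1, \rv_2$ together with a head/tail pair that witnesses non-triviality, and then construct a second pair, obtained by negating the head embedding, which violates the subproperty implication.

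First, I would record the useful algebraic fact that $\freal(\tuple{\hv, \rv, \overline{\tv}})$ is linear in $\hv$ when the entity embedding is viewed as a real vector of dimension $2k$ (real and imaginary parts stacked). In particular, $\freal(\tuple{-\hv, \rv, \overline{\tv}}) = -\freal(\tuple{\hv, \rv, \overline{\tv}})$ for every $\rv, \tv$. Since $\sigma$ is strictly monotone, thresholding the ComplEx probability at $\tfrac{1}{2}$ corresponds to thresholding this real multilinear form at $0$, and so flipping the sign of $\hv$ always swaps the truth value assigned to a triple by the model.

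Next I would formalise non-trivial subsumption inside ComplEx. Suppose $p_1$ is a subproperty of $p_2$ holding over all legal entity embeddings (ComplEx places no restriction, so $\set{E^*} = \mathds{C}^{k}$), and suppose it is non-trivial: there exist embeddings $\hv_0, \tv_0$ with $\rel{\he_0, p_2, \te_0} \in \world$ and $\rel{\he_0, p_1, \te_0} \in \worldc$. In ComplEx this means $\freal(\tuple{\hv_0, \rv_2, \overline{\tv_0}}) > 0$ while $\freal(\tuple{\hv_0, \rv_1, \overline{\tv_0}}) < 0$. Now consider the entity whose embedding is $-\hv_0$, which is legal by the remark above. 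By the sign-flip observation, $\freal(\tuple{-\hv_0, \rv_1, \overline{\tv_0}}) > 0$, so the model asserts the $p_1$-triple at this entity, whereas $\freal(\tuple{-\hv_0, \rv_2, \overline{\tv_0}}) < 0$, so the model denies the corresponding $p_2$-triple. That directly contradicts the assumed subsumption $p_1 \sqsubseteq p_2$ over all legal entities.

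The only delicate point I expect is handling the boundary cases where the ComplEx score is exactly zero, since the subsumption condition is stated as a logical implication rather than as an inequality between probabilities. I would deal with this by noting that the zero level set $\{(\hv, \tv) : \freal(\tuple{\hv, \rv_i, \overline{\tv}}) = 0\}$ is a proper real-algebraic subvariety of $\mathds{C}^{k} \times \mathds{C}^{k}$, so whenever a witness of non-triviality exists, an arbitrarily small perturbation produces a witness with both scores strictly bounded away from zero; the argument above then applies verbatim. Modulo that technicality, the whole proof reduces to a single one-line symmetry observation about $\hv \mapsto -\hv$.
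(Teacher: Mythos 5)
Your proposal is correct and uses essentially the same argument as the paper: both exploit the oddness of the ComplEx score under $\hv \mapsto -\hv$ (equivalently $\sigma(-x) = 1-\sigma(x)$) to turn a witness of non-triviality into a violation of the subsumption at the negated head embedding. The only cosmetic difference is that the paper phrases subsumption as the probability inequality $\mu(\xv,\sv,\yv) \geq \mu(\xv,\rv,\yv)$ and contradicts it directly, which lets it skip the thresholding-at-$\tfrac{1}{2}$ and boundary-perturbation step you add.
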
 
\begin{proof}
Assume a non-trivial subsumption so that $\forall \xe, \ye \in \set{E^*}: \rel{\xe, \re, \ye} \rightarrow \rel{\xe, \ee, \ye}$, and so $\mu(\xv,\sv,\yv) \geq \mu(\xv,\rv,\yv)$, and there are entities $a, b \in \set{E^*}$ such that $\mu(\vc{a},\sv,\vc{b})>\mu(\vc{a},\rv,\vc{b})$. 
Let $a'$ be an entity such that $\vc{a}'=-\vc{a}$. Then $\mu(\vc{a}',\sv,\vc{b})= 1-\mu(\vc{a},\sv,\vc{b})$ and 
$\mu(\vc{a}',\rv,\vc{b})= 1-\mu(\vc{a},\vc{r},\vc{b})$, so $\mu(\vc{a}',\sv,\vc{b}) < \mu(\vc{a}',\vc{r}, \vc{b})$,
a contradiction to the subsumption we assumed.
% Hence, we have:
% \begin{equation}\label{eq1'}
% p((x_i, r, y_i)) < p((x_i, s, y_i))
% \end{equation}
% The implication rule does not satisfy for an arbitrary entity $x_j$ with embedding of $E(x_j) = \neg E(x_i)$ as in \emph{ComplEx} we have:
% \begin{equation}\label{eq2'}
% p((x_j, r, y_i)) = 1 - p((x_i, r, y_i))
% \end{equation}
% \begin{equation}\label{eq3'}
% p((x_j, s, y_i)) = 1 - p((x_i, s, y_i))
% \end{equation}
% Using Eq.s \ref{eq1'}, \ref{eq2'}, and \ref{eq3'}:
% \begin{equation}
% p((x_j, s, y_i)) < p((x_j, r, y_i))
% \end{equation}
%This is in contradiction with the subsumption rule. 
$\blacksquare$
\end{proof}

% \SR{we should pick to use either x,r,y or h,r,t for general triples.} \BF{I will use (h, r, t) for a general triple}
Recently, \cite{ding2018improving} proposed a method which they call ComplEx-NNE+AER to incorporate a weaker notion of subsumption in ComplEx. For a subsumption $\forall \xe, \ye \in \set{E^*}: \rel{\xe,\re,\ye} \rightarrow \rel{\xe,\ee,\ye}$, they suggest adding soft constraints to the loss function to encourage $\freal(\rv) \leq \freal(\sv)$ and $\fim(\rv) = \fim(\sv)$. When the constraints are satisfied, ComplEx-NNE+AER ensures $\forall \xe, \ye \in \entset: \mu(\xv, \rv, \yv) \leq \mu(\xv, \sv, \yv)$. This is a weaker notion than the definition in the Factorization and Embedding section which requires $\forall \xe, \ye \in \entset: \mu(\xv, \rv, \yv) \leq \mu(\xv, \sv, \yv)$ (that is, \set{E^*} is replaced with \set{E}).

\begin{table*}[t]
%\scriptsize
\caption{Results for the choice of non-linearity in producing non-negative embeddings.}
\label{results-table}
\begin{center}
\begin{tabular}{ccccccccccc}
% \toprule
& \multicolumn{5}{c}{WN18} & \multicolumn{5}{c}{FB15k}                   \\
\cmidrule(lr){2-6} \cmidrule(lr){7-11}
& \multicolumn{2}{c}{MRR} & \multicolumn{3}{c}{Hit@} & \multicolumn{2}{c}{MRR} & \multicolumn{3}{c}{Hit@} \\
\cmidrule(lr){2-3} \cmidrule(lr){4-6} \cmidrule(lr){7-8} \cmidrule(lr){9-11}
Function $f$ & Filter & Raw & 1 & 3 & 10 & Filter & Raw & 1 & 3 & 10 \\ \hline
SimplE$^{+}$-Exponential & $0.866$ & $0.547$ & $0.829$ & $0.925$ & $0.897$ & $0.575$ & $0.248$ & $0.468$ & $0.640$ & $0.773$\\
SimplE$^{+}$-Logistic & $0.854$ & $0.542$ & $0.836$ & $0.863$ & $0.885$ & $0.425$ & $0.228$ & $0.294$ & $0.491$ & $0.694$  \\
SimplE$^{+}$-ReLU & $0.937$ & $0.575$ & $0.936$ & $0.938$ & $0.939$ & $0.725$ & $0.240$ & $0.658$ & $0.770$ & $0.841$ 
\end{tabular}
\end{center}
\end{table*}

\begin{theorem}
ComplEx-NNE+AER cannot satisfy its constraints and be fully expressive if symmetry constraints are allowed.
\end{theorem}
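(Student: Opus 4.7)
The strategy is to show that combining the AER constraint $\fim(\rv)=\fim(\sv)$ for a subsumption $\re\rightarrow\ee$ with a symmetry constraint on $\re$ — which in any ComplEx-style model is naturally enforced by requiring $\fim(\rv)=\vc{0}$ — propagates symmetry to $\ee$ in the representation, and that this propagation contradicts full expressivity.

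First, I would record the elementary fact that $\fim(\rv)=\vc{0}$ implies that the ComplEx score $\psi$ is symmetric in its head and tail, i.e.\ $\psi(\hv,\rv,\tv)=\psi(\tv,\rv,\hv)$ for all entity embeddings. This follows by writing $\hv=\vc{a}+i\vc{b}$, $\tv=\vc{e}+i\vc{f}$, $\rv=\vc{c}$, and expanding the real part of $\tuple{\hv,\rv,\overline{\tv}}$: the result is $\sum_\ell c_\ell(a_\ell e_\ell+b_\ell f_\ell)$, which is manifestly invariant under $(\vc{a},\vc{b})\leftrightarrow(\vc{e},\vc{f})$.

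Combining the two constraints: if the symmetry of $\re$ is encoded as $\fim(\rv)=\vc{0}$ and the AER constraint for $\re\rightarrow\ee$ is in force, then $\fim(\sv)=\fim(\rv)=\vc{0}$, and the first step then yields $\psi(\xv,\sv,\yv)=\psi(\yv,\sv,\xv)$ for all entity embeddings. To defeat full expressivity I would exhibit a world in which $\re$ is symmetric, $\re\rightarrow\ee$ holds, and $\ee$ is genuinely asymmetric. A minimal example uses three entities $a,b,c$ with the only true $r$-triples being $\rel{a,\re,b}$ and $\rel{b,\re,a}$, the only true $s$-triples being $\rel{a,\ee,b}$, $\rel{b,\ee,a}$, and $\rel{a,\ee,c}$, and with $\rel{c,\ee,a}\in\worldc$. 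Full expressivity would require $\psi(\vc{a},\sv,\vc{c})$ and $\psi(\vc{c},\sv,\vc{a})$ to lie on opposite sides of the classification threshold, contradicting the head-tail symmetry of $\psi$ in $\ee$ derived above.

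The main obstacle is largely conceptual: pinning down the right formalisation of ``symmetry constraint'' in a ComplEx-style method — the natural choice being the coordinate-wise condition $\fim(\rv)=\vc{0}$ that is the standard mechanism for forcing symmetric relations in ComplEx — and verifying that the AER constraint indeed propagates this condition to every relation that $\re$ is subsumed by. The arithmetic in the first step is a routine expansion, and the three-entity world then immediately witnesses the failure of full expressivity.
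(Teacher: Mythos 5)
Your proposal is correct and follows essentially the same route as the paper: both arguments rest on the fact that symmetry of $\re$ in ComplEx is encoded by $\fim(\rv)=\vc{0}$, that the constraint $\fim(\rv)=\fim(\sv)$ then forces $\ee$ to be symmetric as well, and that this contradicts full expressivity whenever the subsuming relation is genuinely asymmetric. The only difference is cosmetic — you spell out the score expansion and give an explicit three-entity world, where the paper cites the symmetry characterization from \citet{trouillon2016complex} and uses the informal married\_to/knows counterexample.
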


\begin{proof}
In ComplEx a relation $\re$ is symmetric for all possible entities if and only if $\fim(\rv) = 0$ \cite[][Section 3]{trouillon2016complex}.
% so that $\rv = \overline{\rv}$. 
In order to satisfy constraints for $\forall \xe, \ye \in \set{E}: \rel{\xe,\re,\ye} \rightarrow \rel{\he,\ee,\te}$, \cite{ding2018improving} assign $\fim(\rv) = \fim(\sv)$. Therefore, if relation $\re$ is symmetric, it enforces relation $\ee$ to be symmetric too which is not generally true. As a counter example, $\re$ might be the married\_to relation, which is symmetric (so the $\fim(\textbf{married\_to}) = 0$), but $\ee$ is the knows relation, and $\forall \xe, \ye \in \set{E}: \rel{\xe, married\_to,\ye} \rightarrow \rel{\xe, knows,\ye}$ is true in real-world, but setting the $\fim(\textbf{knows})$ = $\fim(\textbf{married\_to})$ will imply knows is symmetric, which is not true (as many people know celebrities but celebrities do not know many people).
%  \SR{this example is not quite convincing as the final expression sounds valid.} \BF{I edited it a bit. Does it look better? I wanted to talk about a case that subsumption make sense in real world but this model enforce an aunti-symmetric relation to become symmetric like relation knows.}
$\blacksquare$
\end{proof}

% \BF{Do we need to define a synnetric relation? where?}
% $\forall h, t \in \set{E}: (x, r, y) \rightarrow (x, e, y)$. This constraint will ensure $\mu(\xv, \rv, \yv) \leq \mu(\xv, \sv, \yv)$.\SR{correct?} The problem with this approach is that ComplEx represents a symmetric relation $r$ using $\fim(\rv) = 0$  If we have a subsumption with a symmetric relation $r$, it constraints $s$ to be symmetric as well. As an example, consider the subsumption $\forall x, y \in \entset: (h, married\_to, t) \rightarrow (h, knows, t)$. While $married\_to$ is a symmetric relation, $knows$ is not as many people know celebrities but celebrities do not know many of us. This is a counter-example that shows ComplEx-NNE+AER is not fully expressive.

\subsection{SimplE}\label{sec:simple}
% \BF{I suggest not using + and - as we use + for non-negativity constraints}
%One of the state-of-the-art tensor factorization techniques for link prediction in KGs is SimplE \citep{kazemi2018simple}. 
SimplE~\citep{kazemi2018simple} achieves state-of-the-art in \acr{KG} completion by considering two embeddings for each relation: one for the relation $r \in \relset$ itself and one for its inverse.
We use $\rv^+ \in \Re^{k}$ to denote the ``forward'' embedding of $r$ and $\rv^- \in \Re^{k}$ to denote the embedding of its inverse. 
The embedding $\rv = [\rv^+, \rv^-]$ for a relation is a concatenation of these two parts.    
Similarly, the embedding for each entity $e \in \entset$ has two parts: 
its embedding as a head $\entityv^{+}$ and as a tail $\entityv^{-}$ -- that is $\entityv = [\entityv^{+}, \entityv^{-}]$.
Using this notation, SimplE calculates the probability of $\rel{h, r, t} \in \world$ for each triple in both forward and 
backward directions using 
\begin{align}\label{eq:simple}
 \mu(\hv, \rv, \tv) \defeq \sigma \left ( \frac{1}{2} (
 \tuple{\hv^+,\rv^+,\tv^+} + \tuple{\tv^-, \rv^-,\hv^-})
 \right ).
\end{align}

\citet{kazemi2018simple} prove SimplE is fully expressive and provide a bound on the size of the embedding vectors:
For any truth assignment $\world \subseteq \entset \times \relset \times \entset$, there exists a SimplE model with embedding vectors of size $\min(|\entset| |\relset|, |\world| + 1)$ that represent the assignment.
% \SR{do we need the 2 factor?} \BF{We don't need a 2 here, as this is a bound on size of the embeddings not on number of parameters}
The following theorem shows the limitation of SimplE when it comes to enforcing subsumption.
%However, here we show that similar to complex, SimplE cannot represent subsumptions.
% \begin{thm}
% For any truth assignment to entities $\mathcal{E}$ and relations $\mathcal{R}$, there exists a SimplE model with embeddings vectors of size $|\mathcal{E}|.|\mathcal{R}|$ that represent the assignment.
% \end{thm}
% \begin{thm}
% For any truth assignment to entities $\mathcal{E}$ and relations $\mathcal{R}$ containing $\lambda$ true facts, there exists a SimplE model with embeddings vectors of size $\lambda + 1$ that represent the assignment.
% \end{thm}
% \begin{thm}
% For any truth assignment to entities $\mathcal{E}$ and relations $\mathcal{R}$ containing $\lambda$ true facts, there exists a SimplE model with embeddings vectors of size $min(|\mathcal{E}|.|\mathcal{R}|,\lambda + 1)$ that represent the assignment.
% \end{thm}

\begin{theorem}
SimplE cannot enforce non-trivial subsumptions.
\end{theorem}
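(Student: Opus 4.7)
The plan is to imitate the ComplEx argument, exploiting the fact that, before the sigmoid, SimplE's score is multilinear (and in particular odd) in the head embedding. Negating the head will flip the sign of the pre-sigmoid score and therefore complement $\mu$, yielding the desired contradiction.

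First, I would unpack the subsumption hypothesis: assume $\re$ is a non-trivial subproperty of $\ee$, so that $\mu(\xv,\sv,\yv) \geq \mu(\xv,\rv,\yv)$ for every legal $\xe,\ye \in \set{E^*}$, and fix witnesses $a,b \in \set{E^*}$ with strict inequality $\mu(\vc{a},\sv,\vc{b}) > \mu(\vc{a},\rv,\vc{b})$.

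Next, I would consider an entity $a'$ whose SimplE embedding is $\vc{a}' = -\vc{a}$, so that $\vc{a}'^{+} = -\vc{a}^{+}$ and $\vc{a}'^{-} = -\vc{a}^{-}$. Since SimplE places no constraint on entity embeddings beyond membership in $\Re^{2k}$, we have $a' \in \set{E^*}$. By multilinearity of $\tuple{\cdot,\cdot,\cdot}$, both $\tuple{\vc{a}'^{+},\rv^{+},\vc{b}^{+}}$ and $\tuple{\vc{b}^{-},\rv^{-},\vc{a}'^{-}}$ equal the negatives of the corresponding quantities at $\vc{a}$, and analogously with $\sv$ in place of $\rv$. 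Therefore the entire pre-sigmoid score in $\mu$ flips sign when $\vc{a}$ is replaced by $\vc{a}'$, and since $\sigma(-x) = 1 - \sigma(x)$, we obtain $\mu(\vc{a}',\rv,\vc{b}) = 1 - \mu(\vc{a},\rv,\vc{b})$ and $\mu(\vc{a}',\sv,\vc{b}) = 1 - \mu(\vc{a},\sv,\vc{b})$.

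Combining these identities with the strict inequality at $(a,b)$ gives $\mu(\vc{a}',\sv,\vc{b}) < \mu(\vc{a}',\rv,\vc{b})$, contradicting the subsumption hypothesis applied at the pair $(a',b)$. The argument is essentially routine once the multilinearity observation is made; the only point that needs verification is that $-\vc{a}$ remains a legal SimplE embedding, which is immediate. I therefore expect no substantive obstacle, and the proof will be a direct analogue of the ComplEx case, with the minor bookkeeping of handling the two halves $\hv^{+},\hv^{-}$ of the head embedding simultaneously.
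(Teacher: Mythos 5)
Your proposal is correct and follows essentially the same argument as the paper: negate the head embedding $\vc{a}$, use multilinearity and $\sigma(-x)=1-\sigma(x)$ to complement both probabilities, and contradict the subsumption at the pair $(a',b)$. Your version is in fact slightly more careful than the paper's, since you explicitly verify that both halves $\vc{a}^{+},\vc{a}^{-}$ flip sign and that $-\vc{a}$ remains a legal embedding.
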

\begin{proof} 
Consider $\forall \he, \te \in \set{E^*}: \rel{\he, \re, \te} \rightarrow \rel{\he, \ee, \ye}$ as a non-trivial subsumption. So we have $\mu(\xv,\sv,\yv) \geq \mu(\xv,\rv,\yv)$, and there are entities $a, b \in \mathcal{E^*}$ such that $\mu(\vc{a},\sv,\vc{b})>\mu(\vc{a},\rv,\vc{b})$. 
Let $a'$ be an entity such that $\vc{a}'=-\vc{a}$. Then $\mu(\vc{a}',\ev,\vc{b})= 1-\mu(\vc{a},\ev,\vc{b})$ and 
$\mu(\vc{a}',\vc{r},\vc{b})= 1-\mu(\vc{a},\vc{r},\vc{b})$, so $\mu(\vc{a}',\ev,\vc{b}) < \mu(\vc{a}',\vc{r}, \vc{b}))$ a contradiction to the subsumption we assumed.
$\blacksquare$
\end{proof}

\begin{table}
\centering
\caption{Statistics on the datasets.}\label{statistics}
\begin{tabular}{@{} l *5c @{}}
\toprule
\multicolumn{1}{c}{Dataset}    & $|\mathcal{E}|$  & $|\mathcal{R}|$  & \#train  & \#valid & \#test  \\ 
\midrule
 WN18 & 40,943 & 18 & 141,442 & 5,000& 5,000 \\ 
 FB15k & 14,951 & 1,345 & 483,142 & 50,000 & 59,071 \\
%  NYT & 24,589 & 4,111 & 106,950 & 11,831 & 104,448\\
 Sport & 1039 & 5 & 1312 & - & 307 \\
 Location & 445 & 5 & 384 & - & 100\\
 \end{tabular}
\end{table}

% \begin{table}
% \centering
% \caption{Some of the rules in NYT dataset.}\label{NYT-rules}
% \begin{tabular}{c}
% \multicolumn{1}{c}{Rule}\\ 
% \hline
% \hline
% $(x,party,y) \rightarrow (x,organization,y)$ \\ 
% $(x,father,y) \rightarrow (x,parent,y)$\\
% $(x,prosecutor,y) \rightarrow (x,lawyer,y)$\\
% $(x,daily,y) \rightarrow (x,newspaper,y)$\\
% $(x,ambassador,y) \rightarrow (x,diplomat,y)$\\
% \end{tabular}
% \end{table}

\begin{table*}
\centering
\footnotesize
\caption{Relations and Rules in Sport and Location datasets.}\label{NELL-rules}
\begin{tabular}{ c|c|c } 
 & Relations & Subsumptions \\
 \hline
 Sport & \makecell{AthleteLedSportsTeam \\ AthletePlaysForTeam \\ CoachesTeam \\ OrganizationHiredPerson \\ PersonBelongsToOrganization} & \makecell{$(x,AtheleLedSportsTeam,y) \rightarrow (x,AthletePlaysForTeam,y)$ \\
 $(x,AthletePlaysForTeam,y) \rightarrow (x,PersonBelongsToOrganization,y)$ \\
 $(x,CoachesTeam,y) \rightarrow (x,PersonBelongsToOrganization,y)$ \\
 $(x,OrganizationHiredPerson,y) \rightarrow (y,PersonBelongsToOrganization,x)$ \\
  $(x,PersonBelongsToOrganization,y) \rightarrow (y,OrganizationHiredPerson,x)$ \\
 } \\ 
 \hline
 \hline
 Location & \makecell{CapitalCityOfCountry \\ CityLocatedInCountry \\ CityLocatedInState \\ StateHasCapital \\ StateLocatedInCountry} & \makecell{$(x,CapitalCityOfCountry,y) \rightarrow (x,CityLocatedInCountry,y)$ \\
$(x,StateHasCapital,y) \rightarrow (y,CityLocatedInState,x)$} \\ 
\end{tabular}
\end{table*}

\subsection{Neural network models}
The neural network models \citep{socher2013reasoning, dong2014knowledge,santoro2017simple} are very flexible, and so without explicit mechanisms to enforce subsumption, they cannot be guaranteed to obey any
subsumption knowledge.

\section{Proposed Variation: SimplE$^{+}$}
% \BF{\simpleplus is SimplE + non-negativity constraint on entity embeddings. Using \simpleplus we can enforce subsumption. I edit the rest based on this. I comment this section and write it from scratch. we can get t back if we want.}

In this section we propose a slight modification on SimplE so that the resulting method can enforce subsumption. The modification is restricting entity embeddings to be non-negative -- that is $\entityv^+, \entityv^{-} \geq 0 \; \forall e \in \entset$, where the inequality is element-wise.
% \SR{I think it makes more sense to replace the term parameter-sharing with inequality "constraint".}\BF{but it's not just inequality, it's tying one embedding as embedding of its parent + a differentia about child.}
% In this section we show that by constraining the embeddings for both entities and relations we can enforce subsumptions. Similar to SimplE, our embedding for entities and relations has two parts, $\rv^+$ and $\rv^-$ for relation $\re$ and  $\ev^+$ and $\ev^-$ for entity $\ee$.
% We add the following two constraints to SimplE:
% \begin{itemize}
%     \item entity embeddings are restricted to be non-negative -- that is $\ev^+, \ev^{-} \geq 0 \; \forall e \in \entset$,
%     where the inequality is elementwise.
%     \item for any implication rule of the form $\forall x, y \in \entset: (x, r, y) \rightarrow (x, s, y)$, we constrain
%     $\sv \geq \rv$.\SR{this is probably wrong as we should say this for $\sv^+$ and/or $\sv^-$, so please fix it.}
% \end{itemize}
% \SR{we probably can drop the following. I was hoping to get a complexity wrt 
% $\entset$ and $\relset$, but now looking at the training procedure, this 
% doesn't seem to matter.}
% We then use $\psi$ as defined in \cref{eq:simple} to calculate $P(\rel{h,r,t})$. 
% None of the above constraints increases the time complexity of making predictions. Therefore, similarly to SimplE, the time complexity of making predictions with \simpleplus is $\mathcal{O}(k)$, where $k$ is the size of the embedding vectors.
Next we show that the resulting model is fully expressive and is able to enforce subsumption.

\begin{theorem}[Expressivity]
For any truth assignment over entities $\entset$ and relations $\relset$ containing $|\world|$ true facts,
% $\world \subseteq \entset \times \relset \times \entset$, 
there exists a \simpleplus model with embeddings vectors of size $\min(|\entset| |\relset| + 1, |\world|+1)$ that represent the assignment.
\end{theorem}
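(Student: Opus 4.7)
The plan is to adapt the expressivity proof of ordinary SimplE to the SimplE$^+$ setting. The key observation is that the SimplE$^+$ restriction only constrains entity embeddings to be non-negative, leaving the relation embeddings $\rv^+, \rv^-$ entirely free. Whatever cancellation the original SimplE proof performs by placing negative values inside entity coordinates, we can instead push into relation coordinates or into a single dedicated ``bias'' coordinate, at the cost of at most one additional dimension.

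For the $|\world|+1$ bound, enumerate $\world = \{\tau_1, \ldots, \tau_n\}$ with $\tau_i = (h_i, r_i, t_i)$ and use one coordinate per true fact plus one bias coordinate. In coordinate $i$ set $\entityv_e^+[i] = \mathds{1}[e = h_i]$, $\entityv_e^-[i] = \mathds{1}[e = t_i]$, $\rv_r^+[i] = L \cdot \mathds{1}[r = r_i]$ for a large constant $L > 0$, and $\rv_r^-[i] = 0$. In the bias coordinate set every entity embedding equal to $1$ and both relation embeddings equal to $-\epsilon$ for a small $\epsilon > 0$. A direct calculation of the SimplE$^+$ score then yields $\tfrac{1}{2}\bigl(L\cdot \mathds{1}[(h,r,t)\in \world] - 2\epsilon\bigr)$ before the sigmoid, which is positive on $\world$ and negative on $\worldc$ whenever $0 < \epsilon < L/2$. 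All entity coordinates lie in $\{0,1\}$, so non-negativity is respected.

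For the $|\entset||\relset|+1$ bound, index the first $|\entset||\relset|$ coordinates by pairs $(e,r) \in \entset \times \relset$. In coordinate $(e,r)$ set $\entityv_{e'}^+[(e,r)] = \mathds{1}[e' = e]$, $\entityv_{e'}^-[(e,r)] = \mathds{1}[(e,r,e') \in \world]$, $\rv_{r'}^+[(e,r)] = \mathds{1}[r' = r]$, and $\rv_{r'}^-[(e,r)] = 0$. Append the same bias coordinate as before. Then the forward term contributes exactly $\mathds{1}[(h,r,t) \in \world]$ and the bias coordinate subtracts $\epsilon$, which is enough to separate $\world$ from $\worldc$. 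Again every entity embedding lies in $\{0,1\}$ and is therefore non-negative.

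The main obstacle I foresee is preventing cross-talk through the backward term: without care, the relation values that make the forward term fire on a true fact $\tau_i$ could cause the backward term to fire on some unrelated triple, producing false positives that can no longer be cancelled inside the (now non-negative) entity coordinates. The clean remedy is to zero out $\rv^-$ on every fact-encoding coordinate so that only the forward term carries the discriminative signal, while the backward term contributes only through the symmetric bias coordinate and plays a purely defensive role. This is exactly the place where the non-negativity constraint bites, and it is why the first bound here is one larger than the corresponding bound $|\entset||\relset|$ established for ordinary SimplE.
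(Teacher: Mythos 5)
Your proposal is correct, and it splits from the paper's proof in an interesting way. For the $|\entset||\relset|+1$ bound your construction is essentially the paper's: the paper also one-hot encodes (head, relation) pairs in the $+$ parts, stores the truth table of $\world$ in the tail entity's $-$ part, and hides the negative threshold in an extra relation coordinate (the paper uses a $-1$ in the relation against a $2$ in the tail where you use $-\epsilon$ against $1$); in both cases the backward term is kept silent so that no cancellation inside entity coordinates is ever needed, which is exactly the point of your closing remark. For the $|\world|+1$ bound, however, the paper argues by induction on $|\world|$: starting from size-$1$ embeddings that make everything false, it adds one coordinate per newly asserted fact and sets the tail's new entry so as to flip that single triple's score from its current value $q<0$ to a positive one. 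Your direct one-coordinate-per-fact construction reaches the same dimension count but is cleaner and, importantly, more robust: the paper's inductive step as written sets the tail's new coordinate to $q+1$, which is negative whenever $q<-1$ and would violate the very non-negativity constraint that defines \simpleplus{} (it evidently should be $-q+1$ or $|q|+1$), whereas your entity coordinates all lie in $\{0,1\}$ by construction. One small caveat: your calculations assume the forward product pairs $\hv^+$ with $\tv^-$ (the standard SimplE convention), while the paper's displayed scoring function literally reads $\tuple{\hv^+,\rv^+,\tv^+}+\tuple{\tv^-,\rv^-,\hv^-}$, which is symmetric in $h\leftrightarrow t$ term by term and cannot be what is meant; the paper's own proof implicitly uses the same corrected convention you do, so this is the paper's typo rather than a gap in your argument.
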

% \SR{there are two proofs here!}\BF{each proof is for one bound. There are two bounds in the theorem}
\begin{proof} 
Assume $r_i$ is the $i$-th relation in \set{R} and $e_j$ is the $j$-th entity in \set{E}. For a vector $a$ we define $(a)_i$ as the $i$-th element of $a$. We define $(\rv^{+}_i)_n = 1$ if $n \bdiv |\set{E}| = i$ except the last element $(\rv^{+}_i)_{|\set{E}| |\set{R}|} = -1$, and for each entity $\ee_i$ we define $(\ev^{+}_{j})_n = 1$ if $n \bmod |\set{E}| = j$ or $n = |\set{E}| |\set{R}|$ and 0 otherwise. In this setting, for each $r_i$ and $e_j$ product of $\rv_i$ and $\ev_i$ is $0$ everywhere except for the element at $(i * |\set{E}| + j)$ and the last element in the embeddings. In order for the triple $\rel{e_j, r_i, e_k}$ to hold, we define $(\ev_k^-)$ to be a vector where all elements are $0$ except the ($i * |\mathcal{E}| + j$)-th element which is $2$.
This proves that \simpleplus is fully expressive with the bound of $|\entset| |\relset| + 1$ for size of the embeddings.

We use induction to prove the bound $|\world|+1$. Let $|\world| = 0$ (base of induction). We can have embedding vectors of size 1 for each entity and relation, setting the value for entities to 1 and to relations to -1. Then \tuple{\hv^+,\rv^+,\tv^+} + \tuple{\tv^-, \rv^-,\hv^-} is negative for every entities $\he$ and $\te$ and relation $\re$. So there exist an assignment of size 1 that represent this ground truth.

Let's assume for any ground truth where $|\world| = n - 1$, there exists an assignment of values to embedding vectors of size $n$ that represent the ground truth (assumption of induction). We must prove for any ground truth where $|\world| = n$, there exist an assignment of values to embedding vectors of size $n + 1$ that represent this ground truth.

Let $\rel{\he, \re, \te}$ be one of the $n$ true facts. Consider a modified ground truth which is identical to the ground truth with $n$ true facts, except that $\rel{\he, \re, \te}$ is assigned false. The modified ground truth has $n - 1$ true facts and based on the assumption of the induction, we can represent it using some embedding vectors of size $n$. Let $q = \tuple{\hv^+,\rv^+,\tv^+} + \tuple{\tv^-, \rv^-,\hv^-}$. We add an element to the end of all embedding vectors and set it to $0$. This increases the vector size to $n + 1$ but does not change any scores. Then we set $\hv$ to $1$, $\rv$ to 1 and $\tv$ to $q + 1$. This ensure this triple is true for the new vectors, and no other probability of triple is affected.
$\blacksquare$
\end{proof}
\begin{theorem}[Subsumption]\label{subsumption-simpleplus}
SimplE$^+$ guarantees subsumption using an inequality constraints. 
\end{theorem}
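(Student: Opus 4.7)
The plan is to reduce the subsumption requirement $\mu(\hv,\rv,\tv)\le \mu(\hv,\sv,\tv)$ for all $h,t\in\set{E^*}$ to a pointwise inequality on the relation embeddings, exploiting the non-negativity restriction that defines \simpleplus. Since the sigmoid is strictly monotone, subsumption is equivalent to the inequality holding on the pre-activation scores, so I would work throughout with the quantity $\tuple{\hv^+,\rv^+,\tv^+}+\tuple{\tv^-,\rv^-,\hv^-}$ and verify the result independently of $\sigma$.

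First I would expand the \simpleplus score using the definition of $\tuple{\cdot,\cdot,\cdot}$ in \cref{eq:prod} to write the difference of scores as
\begin{equation*}
\sum_{\ell=1}^{k}\hv^+_\ell\,\tv^+_\ell\,(\sv^+_\ell-\rv^+_\ell)\;+\;\sum_{\ell=1}^{k}\tv^-_\ell\,\hv^-_\ell\,(\sv^-_\ell-\rv^-_\ell).
\end{equation*}
The key observation is that the \simpleplus restriction forces $\hv^+,\tv^+,\hv^-,\tv^-\ge 0$ element-wise for every entity, hence each coefficient $\hv^+_\ell\tv^+_\ell$ and $\tv^-_\ell\hv^-_\ell$ is non-negative for every legal pair $(h,t)\in\set{E^*}\times\set{E^*}$. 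Consequently, the constraints
\begin{equation*}
\sv^+\ge \rv^+\quad\text{and}\quad \sv^-\ge \rv^-\qquad\text{(element-wise)}
\end{equation*}
are sufficient to make both sums non-negative, yielding $\mu(\hv,\rv,\tv)\le\mu(\hv,\sv,\tv)$ uniformly over $\set{E^*}$.

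I would then state this as the guarantee: whenever $r$ is declared a subproperty of $s$, imposing the two vector inequalities on the forward and reverse halves of the relation embeddings enforces the subsumption for \emph{all} entities with legal (non-negative) embeddings, in contrast to the ComplEx-NNE+AER relaxation discussed earlier, which only guarantees it for entities in $\set{E}$. The main subtlety, and what I would be most careful about, is ensuring that the argument uses non-negativity of \emph{every} entity embedding coordinate --- that is why the restriction must be imposed on $\entityv^+$ and $\entityv^-$ for all $e\in\entset$ and for all possible entities $\set{E^*}$, not merely those seen in the knowledge graph; otherwise the same cancellation trick used in the impossibility proofs for ComplEx and SimplE (flipping the sign of $\vc{a}$) would apply. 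Apart from this bookkeeping the argument is a one-line rearrangement, so I do not expect any real obstacle.
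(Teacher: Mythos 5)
Your proposal is correct and follows exactly the paper's argument: the element-wise constraint $\sv \ge \rv$ (equivalently $\sv^+\ge\rv^+$ and $\sv^-\ge\rv^-$ on the two halves) combined with non-negativity of all legal entity embeddings forces the score inequality, and the monotone sigmoid preserves it. You simply spell out the bilinear expansion that the paper's one-line proof leaves implicit.
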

% \begin{proof} \SR{should use non-negativity of embeddings in the proof.}
% An implication rule is in the form of $\forall x, y \in \entset: (x, r, y) \rightarrow (x, s, y)$. 
% Let $a, b \in \entset$ be entities that this implication rule holds for them. Hence, we have:
% \begin{equation}\label{eq1}
% p((a, r, b)) \leq p((a, s, b))
% \end{equation}
% By adding the constraint $v_{i}(s) \geq v_i(r)$, the implication rule satisfies for all $x, y \in \mathcal{E}$. We simply tie embeddings of two relations $r$ and $s$ such that the inequality satisfies as in one of the equations in \cref{eq7} or \cref{eq8}.
% \begin{equation}\label{eq7}
% v_{i}(s) = v_i(r) + \delta(s)
% \end{equation}
% \begin{equation}\label{eq8}
% v_{i}(r) = v_i(s) - \delta(r)
% \end{equation}
% Where $\delta$ is a non-negative function.
% \blacksquare
% \end{proof}

\begin{proof} 
Assume $\forall \he, \te \in \set{E^*}: \rel{\he, \re, \te} \rightarrow \rel{\he, \ee, \ye}$ as a non-trivial subsumption. As legal entity embeddings in \simpleplus have non-negative elements, by adding the element-wise inequality constraint $\ev \ge \rv$, we force $\mu(\hv, \ev, \tv) \geq \mu(\hv, \rv, \tv)$ for all $\he, \te \in \set{E^*}$ which is forcing the subsumption.
$\blacksquare$
\end{proof}

\subsection{Objective Function and Training}\label{sec:training}
Given the function $\mu$, that maps embeddings to the probability of a triple, ideally we would like to minimize the following regularized negative log-likelihood function:
\begin{align*}
\mathcal{L}(\{\entityv\}, \{\rv\}) =  -\sum_{\rel{h,r,t} \in \world} \log(\mu(\hv, \rv, \tv)) \\ 
- \sum_{\rel{h,r,t} \in \worldc} \log(1 - \mu(\hv, \rv, \tv)) + \Omega(\{\entityv\}, \{\rv\})
\end{align*}
where \{\entityv\} represents entity embeddings, \{\rv\} represents relation embeddings and $\Omega(\{\entityv\}, \{\rv\})$ is a regularization term. We use L2-regularization in our experiments. 
Optimizing $\mathcal{L}$ poses two challenges: \textbf{I}) we do not know the sets $\world$ and $\worldc$,
as the purpose of \acr{KG} completion is to produce these sets in the first place; 
\textbf{II}) the number of triples (specially in $\worldc$) is often too large, 
and for larger \acr{KG}s exact calculation of these terms is often computationally unfeasible.

To address \textbf{I}, we use $\kg$ as a surrogate for $\world$ and use its complement 
$\kg^{\mathsf{c}} = \entset \times \relset \times \entset - \kg$ instead of $\worldc$.
To address the computational problem in \textbf{II}, we use stochastic optimization and follow
the contrastive approach of \cite{bordes2013translating}: for each mini-batch of positive samples from \acr{KG},
we produce a mini-batch of negative samples of the same size, by randomly ``corrupting'' the head or tail of 
the triple -- \ie replacing it with a random entity.

\paragraph{Enforcing the subsumptions} %\label{sec:complexity}
In order to enforce $\forall \he, \te \in \set{E^*}: \rel{\he, \re, \te} \rightarrow \rel{\he, \ee, \ye}$, we add an equality constraint as $\rv = \ev - \delta_{\re}$, where $\delta_{r}$ is a non-negative vector that specifies how $\re$ differs from $\ee$. We learn $\delta_{r}$ for all relations $\re$ that are in such a subsumption. This equality constraint guarantees the inequality constraint of Theorem \ref{subsumption-simpleplus}. 
% Given a set of subsumptions e.g., in a form of a hierarchical tree as in Figure \ref{fig:tree}, \simpleplus is able to enforce them using the inequality constraints. In learning precedure, we change the inequality constraint to an equality constraint as:

% \begin{figure}[t]
% \includegraphics[trim=20pt 0 10pt 0pt, width=5cm]{}
% \centering
% \caption{Background taxonomic information in form of a tree structure.}
% \label{fig:tree}
% \end{figure}

% \SR{I believe you mentioned that different forms of subsumption, in the form of chains etc. require special handling. You can talk about that and it's effect on complexity here.}

\section{Experimental Results}\label{sec:experiments} 
% \SR{Experimental results are not strong enough. We need more experiments on large datasets.} \BF{see email.}
The objective of our empirical evaluations is two-fold: 
First, we want to see the practical implication of 
non-negativity constraints in terms of effectiveness of training and the quality of final results.
Second, and more importantly, we would like to evaluate the practical benefit of incorporating prior knowledge in the form of subsumptions
in sparse data regimes. 

\textbf{Datasets:}
We conducted experiments on four standard benchmarks: \acr{WN18}, \acr{FB15k}, Sport and Location. \acr{WN18} is a subset of \acr{Wordnet} \citep{miller1995wordnet} and \acr{FB15k} is a subset of \acr{Freebase}~\citep{bollacker2008freebase}. Sport and Location datasets are introduced by \cite{wang2015knowledge}, who created them using NELL \citep{NELL-aaai15}. The relations in Sport and Location, along with the subsumptions, are listed in Table \ref{NELL-rules}. \cref{statistics} gives a summary of these datasets. For evaluation on \acr{WN18}, \acr{FB15k}, we split the existing triples in \acr{KG} into the same train, validation, and test sets using the same split as \citep{bordes2013translating}. 

\textbf{Evaluation Metrics:}\label{sec:metrics}
To evaluate different \acr{KG} completion methods we need to use a train $\set{N}$ and test $\set{T}$ split, where  $\set{N} \cup \set{T} = \kg$. 
We use two evaluation metrics: \acr{hit@t} and Mean Reciprocal Rank (\acr{MRR}). 
Both these measures rely on the \emph{ranking} of a triple in the test set $\rel{h,r,t} \in \set{T}$, obtained by corrupting the head 
(or the tail) of the relation with $h' \neq h$ and estimating $\mu\rel{h',r,t}$. An indicator for a good \acr{KG} completion method is that 
$\rel{h,r,t}$ ranks high in the sorted list among corrupted triples. 

% To produce better rankings \citet{bordes2013translating} suggest removing
% any corrupted relation that is in \acr{KG}. We follow this modified protocol.
Let $\rank_h\rel{h,r,t}$ be the ranking of $\mu\rel{h,r,t}$ among all head-corrupted relations, and 
let $\rank_t\rel{h,r,t}$ denote a similar ranking with tail corruptions.
\acr{MRR} is the mean of the reciprocal rank:
\begin{align*} %\label{eq:mrr}
\func{MRR} \defeq \frac{1}{2 * |\set{T}|} \sum_{\rel{h, r, t} \in \set{T}} \frac{1}{\rank_h\rel{h,r,t}} + \frac{1}{\rank_t\rel{h,r,t}}
\end{align*}
To provide a better metric, \citet{bordes2013translating} suggest removing any corrupted relation that is in \acr{KG}. We refer to the original definition of \acr{MRR} as raw \acr{MRR} and to \citet{bordes2013translating}'s modified version as filtered \acr{MRR}.

\acr{hit@t} measures the proportion of triples in $\set{T}$ that rank among top $t$ after corrupting both heads and tails. 

\subsection{Effect of Non-Negativity Constraints}\label{sec:nonneg}
Non-negativity has been a subject studied in various research fields. In many NLP-related tasks, non-negativity constraints are studies to learn more interpretable representations for words \citep{murphy2012learning}. In matrix factorization, non-negativity constraints are used to produce more coherent and independent factors~\citep{lee1999learning}. \cite{ding2018improving} also proposed using non-negativity constraint to incorporate subsumption into ComplEx. 
We use the non-negativity constraint in SimplE$^+$ to enforce monotonousity of probabilities as dictated by subsumption.
%In this paper, we are using non-negativity constraint to learn more accurate KG embeddings that can reflect the taxonomic information.
% Non-negativity has been a subject studied in various research fields \cite{ding2018improving}. In many NLP-related tasks, non-negativity constraints are studies to learn more interpretable representations for words \cite{murphy2012learning}. In this paper, we are using non-negativity constraint to learn more accurate KG embeddings that can reflect the taxonomic information.
In order to get non-negativity constraint on the embedding of entities, we simply apply an element-wise non-linearity $\phi: \Re \to \Re^{\geq 0}$ before evaluation -- that is 
we replace $\mu(\hv,\rv,\tv)$ with $\mu(\phi(\hv), \rv, \phi(\tv))$. 

\cref{results-table} shows the result of SimplE$^{+}$ with for different choices of $\phi$: 
\textbf{I}) exponential $\phi(x) = e^x$; 
\textbf{II}) logistic $\phi(x) = {(1 + e^{-x})}^{-1}$; and \textbf{III}) rectified linear unit (ReLU) $\phi(x) = \max (x, 0)$. 
% \SR{don't we have results from other methods on these datasets?} \BF{I don't have these results currently. I might be able to get them today. But the purpose of this experiment is to show which function is doing better in general.}
ReLU outperforms other choices, and therefore moving forward we use ReLU for non-negativity constraints.
%These results show that SimplE$^{+}$ with $relu$ function outperforms SimplE$^{+}$ with $sig$ and $exp$.
% \SR{I paused here....}

%Before injecting implication rules, we wanted to test how much we lose by adding the non-negativity constraint on the entity embeddings in practice. 
Next, we evaluate the effect of non-negativity constraint on the performance of the algorithm. 
\cref{results-table-pos} shows our result on \acr{WN18} and \acr{FB15k} datasets. Note that this is effectively comparing \simpleplus with SimplE and ComplEx, without accommodating any subsumptions.
As the results indicate, this constraint does not deteriorate the model's performance.

%shows the result for testing \simpleplus and SimplE with the best parameters proposed by \citep{kazemi2018simple} (\citeyear{kazemi2018simple}) for SimplE without incorporating subsumptions. Looking at the result, we find that we do not lose the expressivity by adding the non-positivity constraint.

% \SR{we need a spellcheck on the whole text if the plan is to submit.}

% \begin{table}[t]
% \scriptsize
% \caption{Results on WN18 for SimplE and SimplE$^+$.}
% \label{results-table-pos}
% \begin{center}
% \begin{tabular}{cccccc}
% % \toprule
% & \multicolumn{5}{c}{WN18}                  \\
% \cmidrule(lr){2-6}
% & \multicolumn{2}{c}{MRR} & \multicolumn{3}{c}{Hit@} \\
% \cmidrule(lr){2-3} \cmidrule(lr){4-6}
% Function $f$ & Filter & Raw & 1 & 3 & 10\\ \hline
% SimplE & $0.942$ & $0.588$ & $0.939$ & $0.944$ & $0.947$\\

% SimplE$^{+}$ & $0.937$ & $0.575$ & $0.936$ & $0.938$ & $0.939$ \\
% \end{tabular}
% \end{center}
% \end{table}

\begin{table*}[t]
%\scriptsize
\caption{Results on \acr{WN18} and \acr{fb15k} for SimplE and SimplE$^+$ without incorporating subsumptions.}
\label{results-table-pos}
\begin{center}
\begin{tabular}{ccccccccccc}
% \toprule
& \multicolumn{5}{c}{WN18} & \multicolumn{5}{c}{FB15K}                   \\
\cmidrule(lr){2-6} \cmidrule(lr){7-11}
& \multicolumn{2}{c}{MRR} & \multicolumn{3}{c}{Hit@} & \multicolumn{2}{c}{MRR} & \multicolumn{3}{c}{Hit@} \\
\cmidrule(lr){2-3} \cmidrule(lr){4-6} \cmidrule(lr){7-8} \cmidrule(lr){9-11}
Model& Filter & Raw & 1 & 3 & 10 & Filter & Raw & 1 & 3 & 10 \\ \hline
ComplEx & $0.941$ & $0.587$ & $0.936$ & $0.945$ & $0.947$ & $0.692$ & $0.242$ & $0.599$ & $0.759$ & $0.840$ \\
SimplE & $0.942$ & $0.588$ & $0.939$ & $0.944$ & $0.947$ & $0.727$ & $0.239$ & $0.660$ & $0.773$ & $0.838$ \\
SimplE$^{+}$ & $0.937$ & $0.575$ & $0.936$ & $0.938$ & $0.939$ & $0.725$ & $0.240$ & $0.658$ & $0.770$ & $0.841$ \\
\end{tabular}
\end{center}
\end{table*}

% \subsection{Few-shot Relation Learning}
% \BF{Scenario: How do we do with relations with few training data but expert knowledge? For this test one baseline could be pure logical inference
% }
\subsection{Sparse Relations}
In this section, we study the scenario of learning relations that appear in few triples in the \acr{KG}. 
In particular, we observe the behaviour of various methods as the amount of training triples varies. 
We train SimplE, \simpleplus, and logical inference on fractions of the Sport training set and test them on the full test set. Logical inference refers to inferring new triples based only on the subsumptions.

Figure \ref{fig:training-data} shows the \acr{hit@1} of the three methods when they are trained on different fractions (percentages) of the training data. According to Figure \ref{fig:training-data}, when training data is scarce, logical inference performs better than (or on-par with) SimplE, as SimplE does not see enough triples to be able to learn meaningful embeddings. As the amount of training data increases, SimplE starts to outperform logical inference as it can better generalize to unseen cases than pure logical inference. The gap between these two methods becomes larger as the amount of training data increases. For all tested fractions, \simpleplus outperforms both SimplE and logical inference as it uses both the generalization power of SimplE and the inference power of logical rules.

\begin{figure}[t]
\includegraphics[trim=20pt 0 20pt 0pt, width=8cm]{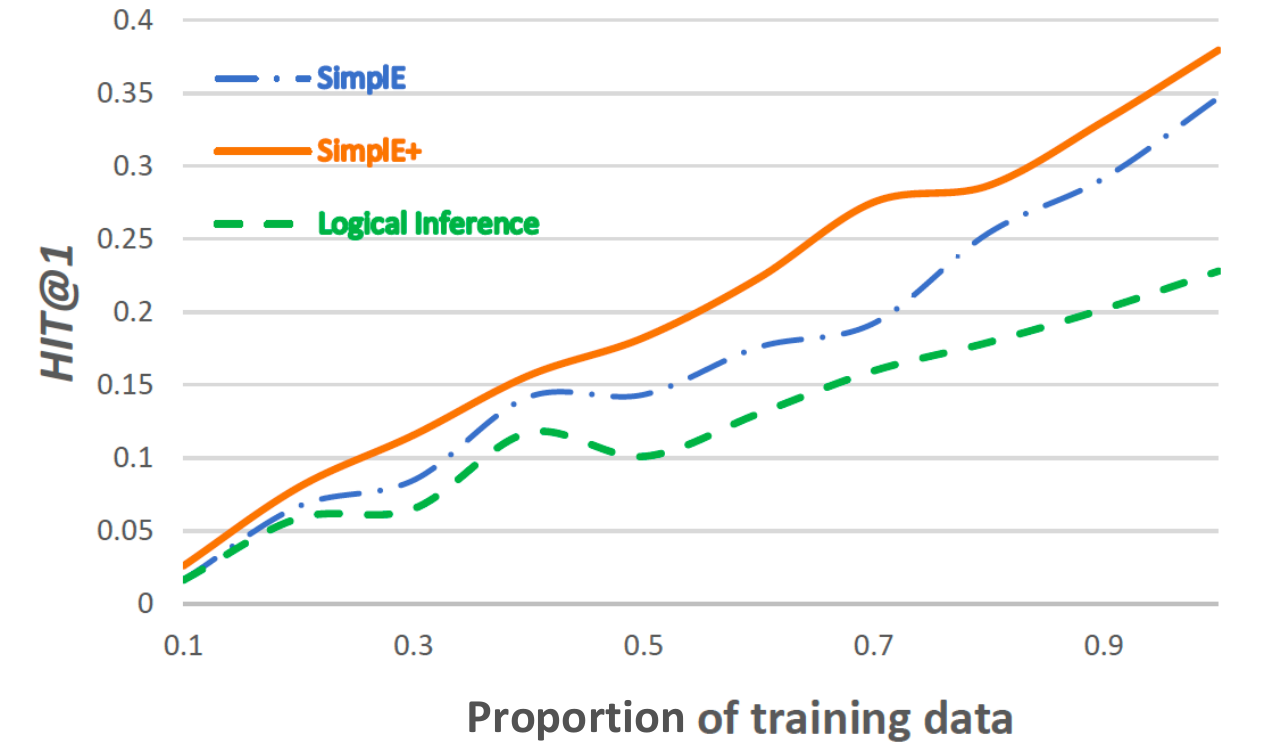}
\centering
\caption{\emph{hit@1 of SimplE, \simpleplus, and logical inference for different proportions of training data on Sport dataset}}
\label{fig:training-data}
\end{figure}

In order to test the effect of incorporating taxonomical information on the number of epochs required for training to converge, we tested SimplE and \simpleplus on the Sport dataset with the same set of parameters and the same initialization and plotted the loss function for each epoch. The plot in Figure~\ref{fig:convergence} shows that \simpleplus requires fewer epochs than SimplE to converge. 
\subsection{\acr{KG}s with no Redundant Triples}
Tensor factorization techniques rely on large amounts of annotated data. When background knowledge is available, we might expect a \acr{KG} to not include redundant information. For instance if we have $\rel{Paris, CapitalCityOfCountry, France}$ in a \acr{kg} and we know $\forall \he, \te \in \set{E^*}: \rel{\he, CapitalCityOfCountry, \te} \rightarrow \rel{\he, CityLocatedInCountry, \te} $, then the triple $\rel{Paris, CityLocatedInCountry, France}$ is redundant. Similar to the experiment for incorporating background knowledge in \cite{kazemi2018simple}, we remove all redundant triples from the training set and compare SimplE with \simpleplus and logical inference. The obtained results in Table~\ref{results-table-inject} demonstrate that \simpleplus outperforms SimplE and logical inference on both Sport and Location datasets with a large margin. As an example, \simpleplus gains almost 90 percent and 230 percent improvement over SimplE in terms of \acr{hit@1} for Sport and Location datasets respectively. These results represent the clear advantage of \simpleplus over SimplE when background taxonomic information is available.  

% \subsection{Convergence}
% \SR{I would discuss this in one of the existing sections, as the experimental evidence for the claim is insufficient, and it is more likely to raise eyebrows if it has its own section.} \BF{sure!}
%\BF{I edited this part. The thing is we already compared SimplE and \simpleplus and \simpleplus did better on sport dataset. So this experiment completes the result on the previous experiment that it performs better and needs less time.}
% \SR{this curve on training loss alone doesn't mean anything. One could have a method that exactly fits the training data and does not generalize at all -- that is we need the validation loss + error-bars to be able to claim anything. More importantly the terms speed of convergence often refers to analytic calculations of the convergence rate.}

\begin{figure}[t]
\includegraphics[width=9cm]{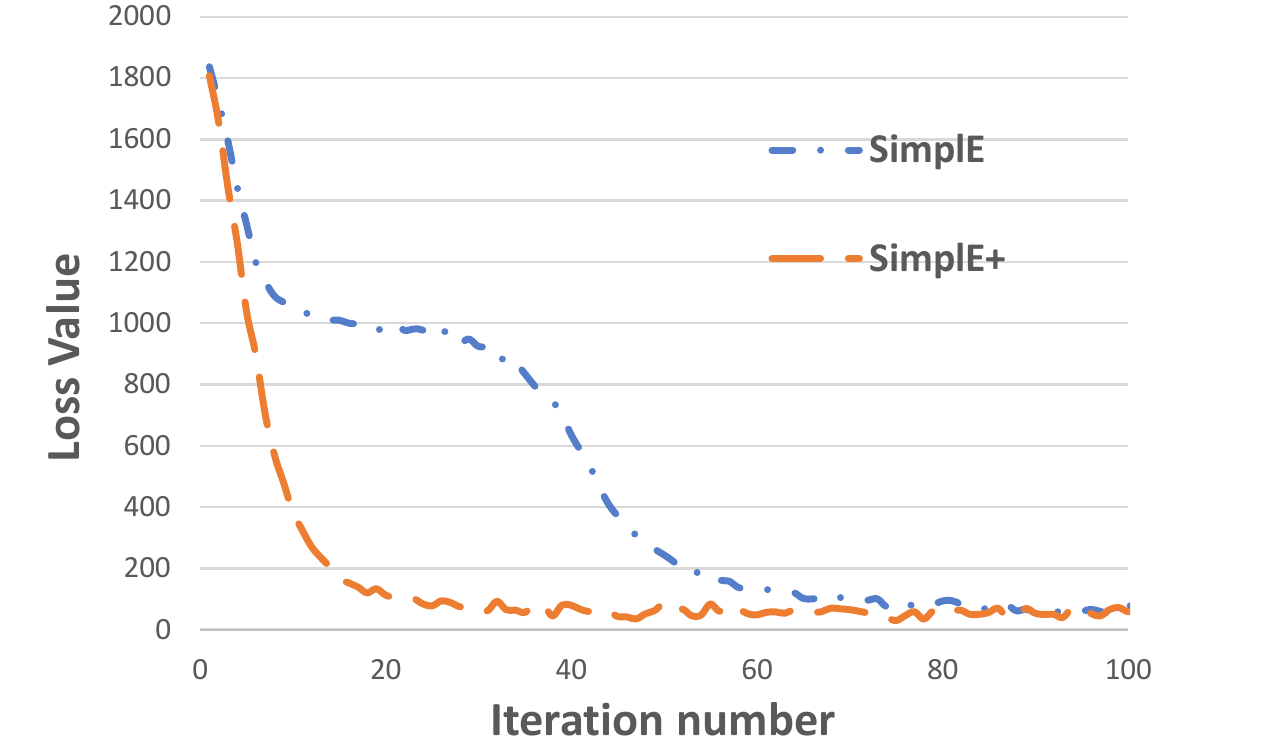}
% \footnotesize
\centering
\caption{\emph{Loss value at each epoch for SimplE and \simpleplus on Sport dataset.}}
\label{fig:convergence}
\end{figure}

\section{Related Work} \label{related-works}
% Incorporating background knowledge while using tensor factorization techniques is an important problem. In applications without enough annotated training data, background knowledge may play the role of annotated data. Even when enough annotated training data is available, we might expect a KG not include redundant information when background knowledge is available. 
Incorporating background knowledge in link prediction methods has been the focus of several studies. %\citep{rocktaschel2014low,wang2014structure,bowman2014recursive,wang2015knowledge,hu2016harnessing,rocktaschel2016learning,cohen2016tensorlog}. 
Here, we categorize these approaches emphasizing the shortcomings that are addressed in our work; see \citep{nickel2016review} for a review of \acr{KG} embedding methods. 
% \SR{is it useful to have a desiderata for KG-completion, and then say what is missing in each work and also identify the shortcomings of this work (if any)?}

% One approach for considering background knowledge when learning entity and relationship embeddings is to directly add triples that can be derived from training data and background knowledge to the training data and use current link prediction approaches such as \citep{reiss2008algebraic}, \citep{chiticariu2013rule}, and \citep{akbik2014exploratory}. However, this approach may be brittle and obtain poor recall since they are unable to generalize to new entities belonging to the classes we have background knowledge available. 

\begin{table*}[bp]
% \footnotesize
\caption{Results on Sport and Location. Best results are in bold. MRR and Hit@n for n $>$ 1 does not make sense for logical inference.}
\label{results-table-inject}
\begin{center}
\begin{tabular}{ccccccccccc}
% \toprule
& \multicolumn{5}{c}{Sport} & \multicolumn{5}{c}{Location}                   \\
\cmidrule(lr){2-6} \cmidrule(lr){7-11}
& \multicolumn{2}{c}{MRR} & \multicolumn{3}{c}{Hit@} & \multicolumn{2}{c}{MRR} & \multicolumn{3}{c}{Hit@} \\
\cmidrule(lr){2-3} \cmidrule(lr){4-6} \cmidrule(lr){7-8} \cmidrule(lr){9-11}
Model& Filter & Raw & 1 & 3 & 10 & Filter & Raw & 1 & 3 & 10 \\ \hline
Logical inference & - & - & $0.288$ & - & - & - & - & $0.270$ & - & -\\
SimplE & $0.230$ & $0.174$ & $0.184$ & $0.234$ & $0.324$ & $0.190$ & $0.189$ & $0.130$ & $0.210$ & $0.315$ \\
SimplE$^{+}$ & $\textbf{0.404}$ & $\textbf{0.337}$ & $\textbf{0.349}$ & $\textbf{0.440}$ & $\textbf{0.508}$ & $\textbf{0.440}$ & $\textbf{0.434}$ & $\textbf{0.430}$ & $\textbf{0.440}$ & $\textbf{0.450}$
\end{tabular}
\end{center}
\end{table*}

\textbf{Soft rules} There is a large family of link prediction models based on soft first-order logic rules \cite{richardson2006markov,de2007problog,kazemi2014relational}. While these models can be easily integrated with background taxonomic information, they typically cannot generalize to unseen cases beyond their rules. Exceptions include \cite{fatemi2016learning,kazemi2018relnn} which combine (stacked layers of) soft rules with entity embeddings, but these models have only applied to property prediction. Approaches based on path-constrained random walks (e.g., \cite{lao2010relational}) suffer from similar limitations as they have been shown to be a subset of probabilistic logic-based models \cite{kazemi2018bridging}.

\textbf{Augmentation by grounding of the rules} The simplest way to incorporate a set of rules in the \acr{KG} is to augment the \acr{KG} with their groundings~\citep{sedghi2018knowledge} \emph{before} learning the embedding. 
\citet{demeester2016lifted} address the computational inefficiency of this approach through {lifted} rule injection.
However, in addition to being inefficient, the the resulting model does not guarantee the subsumption in the completed \acr{KG}.

\textbf{Augmentation through post-processing}
A simple approach is to augment the \acr{KG} \emph{after} learning the embedding using an existing method~\citep{wang2015knowledge, wei2015large}. 
That is, as a post processing step we can modify the output of \acr{KG} completion so as to satisfy the ontological constraints. 
The drawback of this approach is that the background knowledge does not help learn a better representation. 

\textbf{Regularized embeddings}
\citet{rocktaschel2015injecting} regularize the learned embeddings using first-order logic rules. In this work, every logic rule is grounded based on observations and a differentiable term is added to the loss function for every grounding. For example, grounding the rule $\forall x: human(x) \rightarrow animal(x)$ would result in a very large number of loss terms to be added to the loss function in a large \acr{KG}. 
This method as well as other approaches in this category \citep[\eg][]{rocktaschel2014low,wang2015knowledge,wang2016learning} do not scale beyond a few entities and rules, because of the very large number of regularization terms added to the loss function~\citep{demeester2016lifted}. \citet{guo2017knowledge} proposed a methods for incorporating entailment into ComplEx called RUGE which models rules based on t-norm fuzzy logic, which imposes an independence assumption over the atoms. Such an independence assumption is not necessarily true, especially in the case of subsumption, e.g. in $human(x) \rightarrow animal(x)$ for which the left and the right part of the subsumption are strongly dependent. In addition to being inefficient, the resulting model of the regularized embedding approaches does not guarantee the subsumption in the completed \acr{KG}.

\textbf{Constrained matrix factorization}
Several recent works incorporate background ontologies into the embeddings learned by matrix factorization~\citep[\eg][]{rocktaschel2015injecting,demeester2016lifted}.
While these methods address the problems of the two categories above, they are inadequate due to the use of matrix factorization.
Application of matrix factorization for \acr{KG} completion~\citep{riedel2013relation} learns a distinct embedding for each head-tail combination.
In addition to its prohibitive memory requirement, since entities do not have their own embeddings, some regularities in the \acr{KG} are 
ignored; for example this representation is oblivious to the fact that $(h_i, r_k, t_j)$ and $(h_l, r_m, t_j)$ share the same tail.

\textbf{Constrained translation-based methods}
In translation-based methods, the relation between two entities is represented using an affine transformation, often in the form of translation.
Most relevant to our work is \acr{KALE} \citep{guo2016jointly} that constrains the representation to accommodate logical rules, albeit after costly propositionalization.
Several recent works show that a variety of existing translation-based methods are not fully expressive~\citep{wang2017knowledge,kazemi2018simple}, 
putting a severe limitation on the kinds of \acr{KG}s that can be modeled using translation-based approaches. 

\textbf{Region based representation}
\citet{gutierrez2018knowledge} propose representing relations as convex regions in a $2k$-dimensional space, where $k$ is the length of the entity embeddings. A relation between two embeddings is deemed true if the corresponding point is in the convex region of the relation. Although this framework 
allows \citet{gutierrez2018knowledge} to incorporate a subset of existential rules by restricting the convex regions of relations, they did not propose a practical method for learning and their method is restricted to a subset of existential rules.

% \paragraph{Tensor factorization-based methods}

%\BF{I will remove this section and add the methods to the other categories}
% \paragraph{Proposed method} \SR{this should be either in the introduction or proposed method}
% In this project, we will study a new approach for incorporating background knowledge in term of logical rules while learning entity embeddings. In this approach, we encode background knowledge into entity embedding through parameter sharing (weight tying). We build the weight tying module on top of SimplE \citep{kazemi2018simple,lacroix2018canonical} which is a state-of-the-art tensor factorization technique for link prediction in KGs. 
\section{Conclusion and Future Work}
In this paper, we proposed \simpleplus, a fully expressive tensor factorization model for knowledge graph completion when background taxonomic information (in terms of subclasses and subproperties) is available. We showed that existing fully expressive models cannot provably respect subclass and subproperty information. Then we proved that by adding non-negativity constraints to entity embeddings of SimplE, a state-of-the-art tensor factorization approach, we can build a model that is not only fully expressive but also able to enforce subsumptions. Experimental results on benchmark \acr{KG}s demonstrate that \simpleplus is simple yet effective. On our benchmarks, \simpleplus outperforms SimplE and offers a faster convergence rate when background taxonomic information is available. In future, we plan to extend \simpleplus  to further incorporate ontological background information, and rules such as $\forall \xe, \ye \in \set{E^*}: \rel{\xe, \re, \ye} \land \rel{\xe, \ee, \ye} \rightarrow \rel{\xe, p, \ye}$. 
\bibliography{aaai}

\begin{thebibliography}{}

\bibitem[\protect\citeauthoryear{Bollacker \bgroup et al\mbox.\egroup
  }{2008}]{bollacker2008freebase}
Bollacker, K.; Evans, C.; Paritosh, P.; Sturge, T.; and Taylor, J.
\newblock 2008.
\newblock Freebase: a collaboratively created graph database for structuring
  human knowledge.
\newblock In {\em Proceedings of the 2008 ACM SIGMOD international conference
  on Management of data},  1247--1250.
\newblock AcM.

\bibitem[\protect\citeauthoryear{Bordes \bgroup et al\mbox.\egroup
  }{2013}]{bordes2013translating}
Bordes, A.; Usunier, N.; Garcia-Duran, A.; Weston, J.; and Yakhnenko, O.
\newblock 2013.
\newblock Translating embeddings for modeling multi-relational data.
\newblock In {\em NIPS},  2787--2795.

\bibitem[\protect\citeauthoryear{Carlson \bgroup et al\mbox.\egroup
  }{2010}]{carlson2010toward}
Carlson, A.; Betteridge, J.; Kisiel, B.; Settles, B.; Hruschka~Jr, E.~R.; and
  Mitchell, T.~M.
\newblock 2010.
\newblock Toward an architecture for never-ending language learning.
\newblock In {\em AAAI}, volume~5, ~3.
\newblock Atlanta.

\bibitem[\protect\citeauthoryear{De~Raedt, Kimmig, and
  Toivonen}{2007}]{de2007problog}
De~Raedt, L.; Kimmig, A.; and Toivonen, H.
\newblock 2007.
\newblock Problog: A probabilistic prolog and its application in link
  discovery.

\bibitem[\protect\citeauthoryear{Demeester, Rockt{\"a}schel, and
  Riedel}{2016}]{demeester2016lifted}
Demeester, T.; Rockt{\"a}schel, T.; and Riedel, S.
\newblock 2016.
\newblock Lifted rule injection for relation embeddings.
\newblock {\em arXiv preprint arXiv:1606.08359}.

\bibitem[\protect\citeauthoryear{Ding \bgroup et al\mbox.\egroup
  }{2018}]{ding2018improving}
Ding, B.; Wang, Q.; Wang, B.; and Guo, L.
\newblock 2018.
\newblock Improving knowledge graph embedding using simple constraints.
\newblock In {\em Proceedings of the 56th Annual Meeting of the Association for
  Computational Linguistics}.

\bibitem[\protect\citeauthoryear{Dong \bgroup et al\mbox.\egroup
  }{2014}]{dong2014knowledge}
Dong, X.; Gabrilovich, E.; Heitz, G.; Horn, W.; Lao, N.; Murphy, K.; Strohmann,
  T.; Sun, S.; and Zhang, W.
\newblock 2014.
\newblock Knowledge vault: A web-scale approach to probabilistic knowledge
  fusion.
\newblock In {\em Proceedings of the 20th ACM SIGKDD international conference
  on Knowledge discovery and data mining},  601--610.
\newblock ACM.

\bibitem[\protect\citeauthoryear{Fatemi, Kazemi, and
  Poole}{2016}]{fatemi2016learning}
Fatemi, B.; Kazemi, S.~M.; and Poole, D.
\newblock 2016.
\newblock A learning algorithm for relational logistic regression: Preliminary
  results.
\newblock {\em arXiv preprint arXiv:1606.08531}.

\bibitem[\protect\citeauthoryear{Getoor and
  Taskar}{2007}]{getoor2007introduction}
Getoor, L., and Taskar, B.
\newblock 2007.
\newblock {\em Introduction to statistical relational learning}, volume~1.
\newblock MIT press Cambridge.

\bibitem[\protect\citeauthoryear{Grover and
  Leskovec}{2016}]{grover2016node2vec}
Grover, A., and Leskovec, J.
\newblock 2016.
\newblock node2vec: Scalable feature learning for networks.
\newblock In {\em Proceedings of the 22nd ACM SIGKDD international conference
  on Knowledge discovery and data mining},  855--864.
\newblock ACM.

\bibitem[\protect\citeauthoryear{Guo \bgroup et al\mbox.\egroup
  }{2016}]{guo2016jointly}
Guo, S.; Wang, Q.; Wang, L.; Wang, B.; and Guo, L.
\newblock 2016.
\newblock Jointly embedding knowledge graphs and logical rules.
\newblock In {\em EMNLP},  192--202.

\bibitem[\protect\citeauthoryear{Guo \bgroup et al\mbox.\egroup
  }{2018}]{guo2017knowledge}
Guo, S.; Wang, Q.; Wang, L.; Wang, B.; and Guo, L.
\newblock 2018.
\newblock Knowledge graph embedding with iterative guidance from soft rules.
\newblock In {\em AAAI}.

\bibitem[\protect\citeauthoryear{Guti{\'e}rrez-Basulto and
  Schockaert}{2018}]{gutierrez2018knowledge}
Guti{\'e}rrez-Basulto, V., and Schockaert, S.
\newblock 2018.
\newblock From knowledge graph embedding to ontology embedding: Region based
  representations of relational structures.
\newblock {\em arXiv preprint arXiv:1805.10461}.

\bibitem[\protect\citeauthoryear{Hitzler \bgroup et al\mbox.\egroup
  }{2012}]{Hitzler:2012pd}
Hitzler, P.; Kr{\"o}tzsch, M.; Parsia, B.; Patel-Schneider, P.~F.; and Rudolph,
  S., eds.
\newblock 2012.
\newblock {\em OWL 2 Web Ontology Language Primer (Second Edition)}.
\newblock W3C Recommendation 11 December 2012.

\bibitem[\protect\citeauthoryear{Hoff, Raftery, and
  Handcock}{2002}]{hoff2002latent}
Hoff, P.~D.; Raftery, A.~E.; and Handcock, M.~S.
\newblock 2002.
\newblock Latent space approaches to social network analysis.
\newblock {\em J. of the American Statistical association} 97(460):1090--1098.

\bibitem[\protect\citeauthoryear{Kazemi and Poole}{2018a}]{kazemi2018bridging}
Kazemi, S.~M., and Poole, D.
\newblock 2018a.
\newblock Bridging weighted rules and graph random walks for statistical
  relational models.
\newblock {\em Frontiers in Robotics and AI} 5:8.

\bibitem[\protect\citeauthoryear{Kazemi and Poole}{2018b}]{kazemi2018relnn}
Kazemi, S.~M., and Poole, D.
\newblock 2018b.
\newblock Relnn: a deep neural model for relational learning.

\bibitem[\protect\citeauthoryear{Kazemi and Poole}{2018c}]{kazemi2018simple}
Kazemi, S.~M., and Poole, D.
\newblock 2018c.
\newblock Simple embedding for link prediction in knowledge graphs.
\newblock In {\em NIPS}.

\bibitem[\protect\citeauthoryear{Kazemi \bgroup et al\mbox.\egroup
  }{2014}]{kazemi2014relational}
Kazemi, S.~M.; Buchman, D.; Kersting, K.; Natarajan, S.; and Poole, D.
\newblock 2014.
\newblock Relational logistic regression.
\newblock In {\em KR}.
\newblock Vienna.

\bibitem[\protect\citeauthoryear{Lao and Cohen}{2010}]{lao2010relational}
Lao, N., and Cohen, W.~W.
\newblock 2010.
\newblock Relational retrieval using a combination of path-constrained random
  walks.
\newblock {\em Machine learning} 81(1):53--67.

\bibitem[\protect\citeauthoryear{Lee and Seung}{1999}]{lee1999learning}
Lee, D.~D., and Seung, H.~S.
\newblock 1999.
\newblock Learning the parts of objects by non-negative matrix factorization.
\newblock {\em Nature} 401(6755):788.

\bibitem[\protect\citeauthoryear{Mikolov \bgroup et al\mbox.\egroup
  }{2013}]{mikolov2013distributed}
Mikolov, T.; Sutskever, I.; Chen, K.; Corrado, G.~S.; and Dean, J.
\newblock 2013.
\newblock Distributed representations of words and phrases and their
  compositionality.
\newblock In {\em NIPS},  3111--3119.

\bibitem[\protect\citeauthoryear{Miller}{1995}]{miller1995wordnet}
Miller, G.~A.
\newblock 1995.
\newblock Wordnet: a lexical database for english.
\newblock {\em Communications of the ACM} 38(11):39--41.

\bibitem[\protect\citeauthoryear{Mitchell \bgroup et al\mbox.\egroup
  }{2015}]{NELL-aaai15}
Mitchell, T.; Cohen, W.; Hruschka, E.; Talukdar, P.; Betteridge, J.; Carlson,
  A.; Dalvi, B.; Gardner, M.; Kisiel, B.; Krishnamurthy, J.; Lao, N.; Mazaitis,
  K.; Mohamed, T.; Nakashole, N.; Platanios, E.; Ritter, A.; Samadi, M.;
  Settles, B.; Wang, R.; Wijaya, D.; Gupta, A.; Chen, X.; Saparov, A.; Greaves,
  M.; and Welling, J.
\newblock 2015.
\newblock Never-ending learning.
\newblock In {\em AAAI}.

\bibitem[\protect\citeauthoryear{Murphy, Talukdar, and
  Mitchell}{2012}]{murphy2012learning}
Murphy, B.; Talukdar, P.; and Mitchell, T.
\newblock 2012.
\newblock Learning effective and interpretable semantic models using
  non-negative sparse embedding.
\newblock {\em Proceedings of COLING 2012}  1933--1950.

\bibitem[\protect\citeauthoryear{Nguyen}{2017}]{nguyen2017overview}
Nguyen, D.~Q.
\newblock 2017.
\newblock An overview of embedding models of entities and relationships for
  knowledge base completion.
\newblock {\em arXiv preprint arXiv:1703.08098}.

\bibitem[\protect\citeauthoryear{Nickel \bgroup et al\mbox.\egroup
  }{2016}]{nickel2016review}
Nickel, M.; Murphy, K.; Tresp, V.; and Gabrilovich, E.
\newblock 2016.
\newblock A review of relational machine learning for knowledge graphs.
\newblock {\em Proceedings of the IEEE} 104(1):11--33.

\bibitem[\protect\citeauthoryear{Nickel, Tresp, and
  Kriegel}{2012}]{nickel2012factorizing}
Nickel, M.; Tresp, V.; and Kriegel, H.-P.
\newblock 2012.
\newblock Factorizing yago: scalable machine learning for linked data.
\newblock In {\em Proceedings of the 21st international conference on World
  Wide Web},  271--280.
\newblock ACM.

\bibitem[\protect\citeauthoryear{Pennington, Socher, and
  Manning}{2014}]{pennington2014glove}
Pennington, J.; Socher, R.; and Manning, C.
\newblock 2014.
\newblock Glove: Global vectors for word representation.
\newblock In {\em EMNLP},  1532--1543.

\bibitem[\protect\citeauthoryear{Perozzi, Al-Rfou, and
  Skiena}{2014}]{perozzi2014deepwalk}
Perozzi, B.; Al-Rfou, R.; and Skiena, S.
\newblock 2014.
\newblock Deepwalk: Online learning of social representations.
\newblock In {\em Proceedings of the 20th ACM SIGKDD international conference
  on Knowledge discovery and data mining},  701--710.
\newblock ACM.

\bibitem[\protect\citeauthoryear{Raedt \bgroup et al\mbox.\egroup
  }{2016}]{raedt2016statistical}
Raedt, L.~D.; Kersting, K.; Natarajan, S.; and Poole, D.
\newblock 2016.
\newblock Statistical relational artificial intelligence: Logic, probability,
  and computation.
\newblock {\em Synthesis Lectures on Artificial Intelligence and Machine
  Learning} 10(2):1--189.

\bibitem[\protect\citeauthoryear{Richardson and
  Domingos}{2006}]{richardson2006markov}
Richardson, M., and Domingos, P.
\newblock 2006.
\newblock Markov logic networks.
\newblock {\em Machine learning} 62(1-2):107--136.

\bibitem[\protect\citeauthoryear{Riedel \bgroup et al\mbox.\egroup
  }{2013}]{riedel2013relation}
Riedel, S.; Yao, L.; McCallum, A.; and Marlin, B.~M.
\newblock 2013.
\newblock Relation extraction with matrix factorization and universal schemas.
\newblock In {\em Proceedings of the 2013 Conference of the North American
  Chapter of ACL: Human Language Technologies},  74--84.

\bibitem[\protect\citeauthoryear{Rockt{\"a}schel \bgroup et al\mbox.\egroup
  }{2014}]{rocktaschel2014low}
Rockt{\"a}schel, T.; Bo{\v{s}}njak, M.; Singh, S.; and Riedel, S.
\newblock 2014.
\newblock Low-dimensional embeddings of logic.
\newblock In {\em Proceedings of the ACL 2014 Workshop on Semantic Parsing},
  45--49.

\bibitem[\protect\citeauthoryear{Rockt{\"a}schel, Singh, and
  Riedel}{2015}]{rocktaschel2015injecting}
Rockt{\"a}schel, T.; Singh, S.; and Riedel, S.
\newblock 2015.
\newblock Injecting logical background knowledge into embeddings for relation
  extraction.
\newblock In {\em Proceedings of the 2015 Conference of the North American
  Chapter of the ACL: Human Language Technologies},  1119--1129.

\bibitem[\protect\citeauthoryear{Santoro \bgroup et al\mbox.\egroup
  }{2017}]{santoro2017simple}
Santoro, A.; Raposo, D.; Barrett, D.~G.; Malinowski, M.; Pascanu, R.;
  Battaglia, P.; and Lillicrap, T.
\newblock 2017.
\newblock A simple neural network module for relational reasoning.
\newblock In {\em NIPS},  4967--4976.

\bibitem[\protect\citeauthoryear{Sedghi and
  Sabharwal}{2018}]{sedghi2018knowledge}
Sedghi, H., and Sabharwal, A.
\newblock 2018.
\newblock Knowledge completion for generics using guided tensor factorization.
\newblock {\em Transactions of the Association of Computational Linguistics}
  6:197--210.

\bibitem[\protect\citeauthoryear{Shoham}{2016}]{Shoham:2016aa}
Shoham, Y.
\newblock 2016.
\newblock Why knowledge representation matters.
\newblock {\em Communications of the ACM} 59(1):47--49.

\bibitem[\protect\citeauthoryear{Socher \bgroup et al\mbox.\egroup
  }{2013}]{socher2013reasoning}
Socher, R.; Chen, D.; Manning, C.~D.; and Ng, A.
\newblock 2013.
\newblock Reasoning with neural tensor networks for knowledge base completion.
\newblock In {\em NIPS},  926--934.

\bibitem[\protect\citeauthoryear{Trouillon \bgroup et al\mbox.\egroup
  }{2016}]{trouillon2016complex}
Trouillon, T.; Welbl, J.; Riedel, S.; Gaussier, {\'E}.; and Bouchard, G.
\newblock 2016.
\newblock Complex embeddings for simple link prediction.
\newblock In {\em ICML},  2071--2080.

\bibitem[\protect\citeauthoryear{Trouillon \bgroup et al\mbox.\egroup
  }{2017}]{trouillon2017knowledge}
Trouillon, T.; Dance, C.~R.; Gaussier, {\'E}.; Welbl, J.; Riedel, S.; and
  Bouchard, G.
\newblock 2017.
\newblock Knowledge graph completion via complex tensor factorization.
\newblock {\em JML} 18(1):4735--4772.

\bibitem[\protect\citeauthoryear{Wang and Cohen}{2016}]{wang2016learning}
Wang, W.~Y., and Cohen, W.~W.
\newblock 2016.
\newblock Learning first-order logic embeddings via matrix factorization.
\newblock In {\em IJCAI},  2132--2138.

\bibitem[\protect\citeauthoryear{Wang \bgroup et al\mbox.\egroup
  }{2015}]{wang2015knowledge}
Wang, Q.; Wang, B.; Guo, L.; et~al.
\newblock 2015.
\newblock Knowledge base completion using embeddings and rules.
\newblock In {\em IJCAI},  1859--1866.

\bibitem[\protect\citeauthoryear{Wang \bgroup et al\mbox.\egroup
  }{2017}]{wang2017knowledge}
Wang, Q.; Mao, Z.; Wang, B.; and Guo, L.
\newblock 2017.
\newblock Knowledge graph embedding: A survey of approaches and applications.
\newblock {\em IEEE Transactions on Knowledge and Data Engineering}
  29(12):2724--2743.

\bibitem[\protect\citeauthoryear{Wei \bgroup et al\mbox.\egroup
  }{2015}]{wei2015large}
Wei, Z.; Zhao, J.; Liu, K.; Qi, Z.; Sun, Z.; and Tian, G.
\newblock 2015.
\newblock Large-scale knowledge base completion: Inferring via grounding
  network sampling over selected instances.
\newblock In {\em ICKM},  1331--1340.
\newblock ACM.

\bibitem[\protect\citeauthoryear{Yang \bgroup et al\mbox.\egroup
  }{2014}]{yang2014embedding}
Yang, B.; Yih, W.-t.; He, X.; Gao, J.; and Deng, L.
\newblock 2014.
\newblock Embedding entities and relations for learning and inference in
  knowledge bases.
\newblock {\em arXiv preprint arXiv:1412.6575}.

\end{thebibliography}
\bibliographystyle{aaai}
\end{document}